\documentclass[sigconf,nonacm]{acmart}
\AtBeginDocument{%
  }

\setcopyright{acmlicensed}
\copyrightyear{2018}
\acmYear{2018}
\acmDOI{XXXXXXX.XXXXXXX}
\acmConference[Conference acronym 'XX]{Make sure to enter the correct
  conference title from your rights confirmation email}{June 03--05,
  2018}{Woodstock, NY}
\acmISBN{978-1-4503-XXXX-X/2018/06}

\usepackage{algorithm}
\usepackage{algorithmic} 
\usepackage{booktabs}
\usepackage{multirow}
\usepackage{graphicx}
\usepackage{subcaption}
\begin{document}

%%
%% The "title" command has an optional parameter,
%% allowing the author to define a "short title" to be used in page headers.
\title{ShiftLIF: Efficient Multi-Level Spiking Neurons with Power-of-Two Quantization}

%%
%% The "author" command and its associated commands are used to define
%% the authors and their affiliations.
%% Of note is the shared affiliation of the first two authors, and the
%% "authornote" and "authornotemark" commands
%% used to denote shared contribution to the research.

% Kaiwen Tang, Di Yu, Jiaqi Zheng, Changze Lv, Qianhui Liu, Zhanglu Yan, Weng-Fai Wong

\author{Kaiwen Tang}
\authornote{Both authors contributed equally to this research.}
% \email{tang_kaiwen@u.nus.edu}
\affiliation{%
  \institution{National University of Singapore}
  \country{}
}

\author{Di Yu}
\authornotemark[1]
% \email{larst@affiliation.org}
\affiliation{%
  \institution{Zhejiang University}
  \country{}
}

\author{Jiaqi Zheng}
\affiliation{%
  \institution{Sea AI Lab}
  \country{}
}

\author{Changze Lv}
\affiliation{%
 \institution{Fudan University}
 \country{}
}

\author{Qianhui Liu}
\affiliation{%
  \institution{Shandong University}
  \country{}
}

\author{Zhanglu Yan}
\authornote{Corresponding author.}
\affiliation{%
  \institution{National University of Singapore}
  \country{}
}

\author{Weng-Fai Wong}
\affiliation{%
  \institution{National University of Singapore}
  \country{}
}
% \email{jsmith@affiliation.org}

% \author{Julius P. Kumquat}
% \affiliation{%
%   \institution{The Kumquat Consortium}
%   \city{New York}
%   \country{USA}}
% \email{jpkumquat@consortium.net}

%%
%% By default, the full list of authors will be used in the page
%% headers. Often, this list is too long, and will overlap
%% other information printed in the page headers. This command allows
%% the author to define a more concise list
%% of authors' names for this purpose.
\renewcommand{\shortauthors}{Tang et al.}

%%
%% The abstract is a short summary of the work to be presented in the
%% article.

\begin{abstract}
Spiking neural networks (SNNs) are promising for edge sensing due to their event-driven computation and temporal filtering capability. However, standard leaky integrate-and-fire (LIF) neurons communicate only through binary spikes, which severely limit representational capacity. Existing multi-level spiking neurons improve information transmission, but often rely on uniform quantization that mismatches membrane-potential distributions or introduces costly synaptic multiplications.
In this paper, we propose \textit{ShiftLIF}, a multi-level spiking neuron that maps membrane potentials to a logarithmically spaced power-of-two spike set. This design provides finer representation in the small-amplitude regime, where membrane potentials are densely concentrated, while enabling multiplier-free synaptic computation through bit-shift and accumulation operations. As a result, ShiftLIF improves spike-level expressiveness without sacrificing the hardware-friendly nature of standard SNN computation.
We evaluate ShiftLIF on 10 datasets spanning wireless, acoustic, motion, and visual sensing tasks. Results show that ShiftLIF consistently matches or exceeds the accuracy of existing multi-level spiking neurons while maintaining synaptic energy consumption close to standard binary LIF. These results indicate that ShiftLIF provides a favorable accuracy-efficiency trade-off for cross-modal edge sensing.
\end{abstract}

%%
%% The code below is generated by the tool at http://dl.acm.org/ccs.cfm.
%% Please copy and paste the code instead of the example below.
%%
\begin{CCSXML}
<ccs2012>
 <concept>
  <concept_id>10010147.10010178</concept_id>
  <concept_desc>Computing methodologies~Machine learning</concept_desc>
  <concept_significance>500</concept_significance>
 </concept>
 <concept>
  <concept_id>10010147.10010178.10010187</concept_id>
  <concept_desc>Computing methodologies~Neural networks</concept_desc>
  <concept_significance>500</concept_significance>
 </concept>
</ccs2012>
\end{CCSXML}

\ccsdesc[500]{Computing methodologies~Machine learning}
\ccsdesc[500]{Computing methodologies~Neural networks}

%%
%% Keywords. The author(s) should pick words that accurately describe
%% the work being presented. Separate the keywords with commas.
\keywords{Spiking Neural Networks, Edge Computing, Multimodal Sensing, Energy-efficient Computing}
%% A "teaser" image appears between the author and affiliation
%% information and the body of the document, and typically spans the
%% page.
% \begin{teaserfigure}
%   \includegraphics[width=\textwidth]{sampleteaser}
%   \caption{Seattle Mariners at Spring Training, 2010.}
%   \Description{Enjoying the baseball game from the third-base
%   seats. Ichiro Suzuki preparing to bat.}
%   \label{fig:teaser}
% \end{teaserfigure}

% \received{20 February 2007}
% \received[revised]{12 March 2009}
% \received[accepted]{5 June 2009}

%%
%% This command processes the author and affiliation and title
%% information and builds the first part of the formatted document.
\maketitle

\section{Introduction}
% <!-- Background: edge multimodal sensing needs energy efficiency -->
The rapid expansion of edge computing has driven an increasing demand for sensing applications across wireless, acoustic, and motion domains~\cite{lai2023ai, shi2016edge, wang2023edge}. However, processing these continuous streams on edge devices poses a challenge to energy consumption~\cite{merluzzi2021discontinuous}.
{\em Spiking neural networks} (SNN) have emerged as a highly promising paradigm to address this~\cite{zhang2024spiking, zhou2023computational, tang2026spikyspace}. The most widely adopted mechanism in these networks is the {\em leaky integrate-and-fire} (LIF) neuron~\cite{teeter2018generalized}. Beyond providing event-driven energy efficiency, the leaky integration dynamics intrinsically act as a temporal low-pass filter~\cite{fangspiking, connelly2016thalamus}. This enables the neuron to smooth out high-frequency environmental noise and effectively 
capture slowly varying temporal patterns in real-world physical signals.

% <!-- Strength: Dilemma: Loses info -->
Despite this advantage, standard LIF neurons suffer from a severe representational bottleneck~\cite{xiao2024multi}. Fundamentally, spiking neurons compute rich analog membrane potentials but encode them with only one bit. 
Because neurons communicate strictly through binary spikes, this extreme quantization discards much of the amplitude information preserved by temporal integration~\cite{han2020rmp, luo2024integer}. As a result, the expressive capacity of the network is significantly constrained, limiting its ability to capture the full complexity of continuous sensory signals.

% <!-- Previous Methods and shortcomings -->
Recent efforts to address this limitation generally follow two directions. The first line of work enhances neuron expressiveness by modifying membrane dynamics, such as PLIF~\cite{fang2021incorporating}, GLIF~\cite{yao2022glif}, CLIF~\cite{huang2024clif}, and RPLIF~\cite{li2025incorporating}, which introduce learnable time constants or additional state variables. While these approaches enrich temporal modeling, they increase computational overhead and still constrain neuron outputs to binary spikes, leaving the quantization bottleneck unresolved. The second line of work increases representational bandwidth by introducing multi-level spikes, such as LM-HT SNN~\cite{hao2024lm} and INT-LIF~\cite{luo2024integer}. However, these methods typically rely on uniformly spaced spike levels, which introduce two limitations: multi-bit spike amplitudes require synaptic multiplications, increasing hardware cost, and uniform quantization poorly matches the intrinsic distribution of membrane potentials, which are often concentrated near small values with a long tail toward larger ones. Consequently, uniform grids allocate excessive resolution to rare large activations while under-representing densely clustered small activations.

% <!-- Proposed method and advantage -->
To address these challenges, we propose ShiftLIF, a novel spiking neuron that both improves representational capacity and hardware efficiency. ShiftLIF maps the continuous membrane potential to a logarithmically spaced set of spikes defined by powers of two, $\{0, 2^{-K}, 2^{-(K-1)}, \dots, 2^{-1}, 1\}$.
This design provides two key advantages. 
Algorithmically, the power-of-two grid is naturally denser near zero, which better matches the heavy-tailed distribution of membrane potentials and captures fine-grained variations in small activations that uniform quantization often misses~\cite{guo2023rmp, przewlocka2022power}. From a hardware perspective, because spike amplitudes are restricted to powers of two and synaptic weights are represented as integers, spike–weight interactions can be implemented using simple bit-shift operations followed by integer accumulation~\cite{ tann2017hardware, li2019additive}. Consequently, ShiftLIF enables fully multiplier-free synaptic computation, preserving the core energy efficiency advantage of spiking neural networks. Moreover, since the bit-shift operation effectively reduces the bit-width of the shifted weights, the subsequent accumulation involves smaller operands, which can further reduce switching activity and energy consumption compared to standard accumulation~\cite{solanki2025atm, coward2024combining}.

To evaluate ShiftLIF, we conduct extensive experiments across 11 diverse datasets encompassing wireless, acoustic, motion, and visual sensing modalities. Empirical results demonstrate that ShiftLIF consistently achieves competitive or superior classification accuracy compared to state-of-the-art multi-level spiking neurons. Furthermore, it accomplishes this while consuming lower synaptic energy comparable to that of standard binary LIF models. % thereby establishing an optimal accuracy-efficiency trade-off largely unattained by existing methodologies.

The main contributions of this paper are:
\begin{enumerate}
\item We identify the fundamental accuracy–efficiency dilemma in existing multi-level SNNs, where uniform quantization increases computational overhead and poorly matches the natural distribution of membrane potentials.

\item We propose ShiftLIF, a novel spiking neuron that generates multi-level spikes constrained to powers of two, enabling distribution-aware representation while preserving multiplier-free synaptic computation.

\item We establish a comprehensive cross-modal benchmark across wireless, acoustic, motion, and visual sensing tasks, demonstrating that ShiftLIF achieves state-of-the-art accuracy while significantly improving energy efficiency over other LIF variants.
\end{enumerate}

\begin{figure*}[t] 
    \centering
    \includegraphics[width=\textwidth]{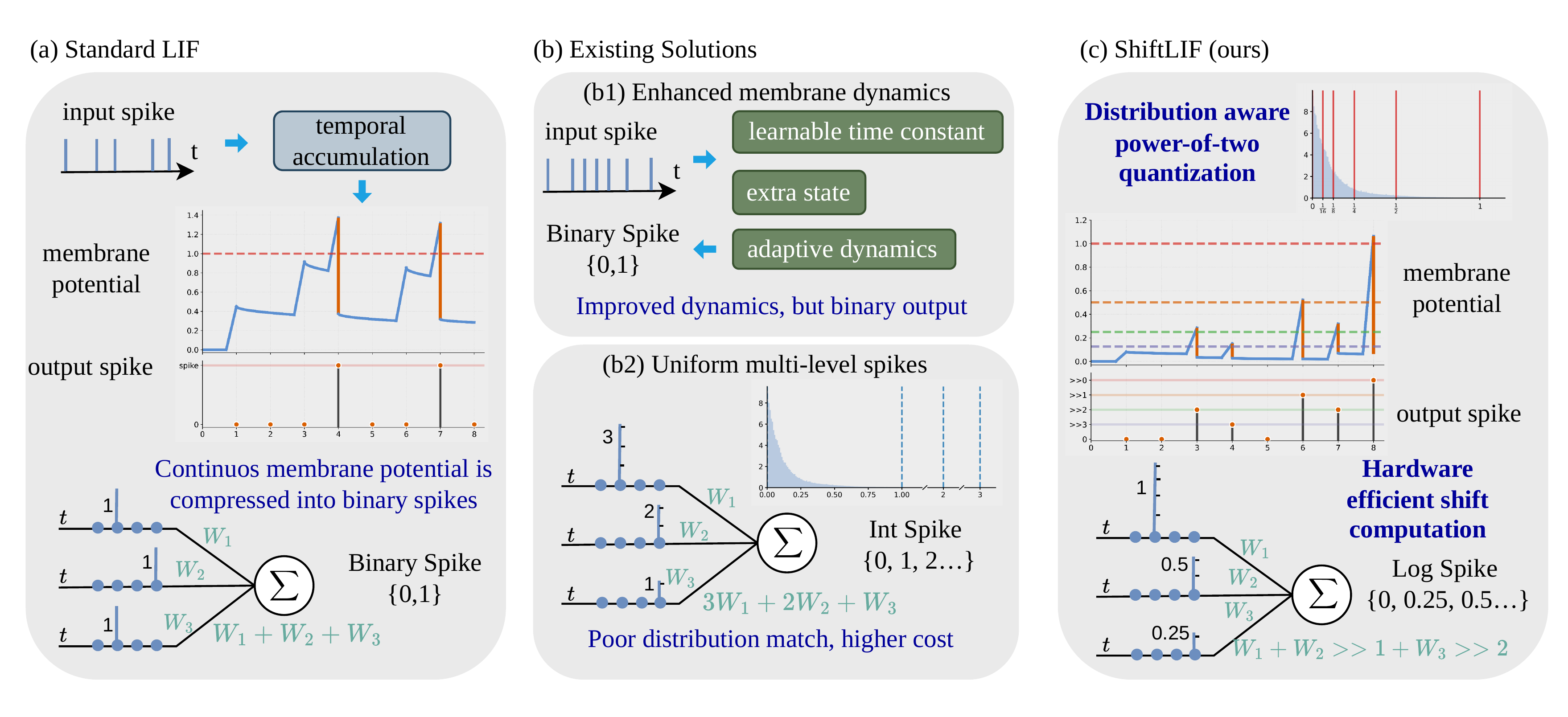}
    \caption{Comparison of standard LIF, existing solutions, and ShiftLIF. In (c), the colored dashed lines mark the thresholds of different power-of-two output levels, and the top inset shows a small-value-dominant membrane distribution with quantization points denser near zero. At the bottom, $W \gg k$ denotes a bitwise right shift by $k$ bits, highlighting that ShiftLIF replaces multiplication with shift-and-accumulate computation.
}
    \label{fig:wide_image}
\end{figure*}

\section{Related Work}

\subsection{SNNs for Edge Sensing Data}

Spiking Neural Networks have been increasingly explored for edge sensing applications, including acoustic recognition, wearable motion analysis, event-based vision, and wireless sensing, due to their event-driven sparsity and native temporal processing capability~\cite{deng2025edge,sabbella2025promise,baek2024snn, li2023efficient}. Existing sensing-oriented SNN studies have primarily emphasized architectural adaptation, temporal feature extraction, encoding strategies, and surrogate gradient for direct training~\cite{neftci2019surrogate, guo2023direct, malcolm2023comprehensive}. In comparison, the spike representation mechanism itself, especially the design of the neuron output alphabet beyond binary communication, remains relatively underexplored in SNNs for sensing applications design.

\subsection{LIF Neurons and Their Variants}
To overcome the representational limitations of standard LIF dynamics, numerous variants have been proposed, broadly falling into two categories.
The first category enriches internal membrane dynamics while strictly retaining binary spike communication. For instance, PLIF \cite{fang2021incorporating} introduces learnable membrane time constants, GLIF \cite{yao2022glif} incorporates complex gating mechanisms, and CLIF \cite{huang2024clif} adds complementary paths for better gradient flow. While these variants significantly improve temporal modeling and optimization behavior, they ultimately compress the rich, continuous membrane state into a 1-bit binary spike, leaving the inter-layer communication bottleneck unresolved.

The second category directly expands the spike alphabet to mitigate this 1-bit limitation. Representative works include multi-threshold or ternary neurons, such as Ternary Spike \cite{guo2024ternary}, MT-SNN \cite{wang2023mt}, I-LIF \cite{sun2025ilif}, and INT-LIF~\cite{luo2024integer}. While these multi-level neurons increase the information carried per spike, they typically rely on uniformly spaced quantization grids or generic multi-bit integer amplitudes. This introduces two critical flaws: first, as demonstrated by recent studies \cite{guo2022recdis, guo2023rmp}, uniform discretization severely mismatches the zero-concentrated, heavy-tailed distribution of membrane potentials, leading to suboptimal level utilization and high relative quantization error~\cite{li2019additive}. Second, utilizing generic multi-bit spikes inevitably reintroduces dense Multiply-Accumulate (MAC) operations at the synapses, fundamentally destroying the hardware-efficiency premise of SNNs. 

ShiftLIF structurally advances this second category. By introducing a logarithmic, power-of-two quantization grid, it simultaneously achieves distribution-aware information maximization and strictly MAC-free, shift-based synaptic computation.

\section{Method}
\subsection{Preliminary: Standard Leaky IF}
The leaky integrate-and-fire (LIF) neuron is the most widely adopted computational unit in spiking neural networks~\cite{voudaskas2025spiking, neftci2019surrogate}. At each discrete time step $t$, the neuron integrates the incoming synaptic current into its membrane potential while gradually decaying previous states. Incorporating a soft-reset mechanism, the membrane dynamics can be formulated as
\begin{equation}
u_t = \lambda u_{t-1} + x_t - s_{t-1}V_{\mathrm{th}},
\end{equation}
where $u_t$ denotes the membrane potential, $\lambda \in [0,1)$ is the leak factor, $x_t$ is the input current, $V_{\mathrm{th}}$ is the firing threshold, and $s_{t-1}$ represents the spike emitted at the previous time step. When the accumulated membrane potential exceeds the threshold, the neuron generates an output spike according to a Heaviside step function:
\begin{equation}
s_t =
\begin{cases}
1, & \text{if } u_t \geq V_{\mathrm{th}},\\
0, & \text{otherwise}.
\end{cases}
\end{equation}

From a signal processing perspective, the leakage term $\lambda$ endows the LIF neuron with the behavior of a first-order infinite impulse response low-pass filter~\cite{fangspiking}. It enables the neuron to temporally integrate recent inputs, effectively suppressing high-frequency environmental noise while preserving slowly varying signal components that are important for real-world sensory perception.

The binary firing mechanism also makes synaptic computation highly efficient, since the interactions between spikes and weights are reduced to simple accumulation without multiplications. However, the LIF neuron suffers from a representational bottleneck because its continuous membrane state is communicated only through binary spikes \(s_t \in \{0,1\}\)~\cite{guo2023rmp, guo2024ternary}. This extreme quantization discards fine-grained amplitude information accumulated through temporal integration, thereby limiting the expressive capacity of the neuron, especially in continuous sensing scenarios where subtle signal variations are often informative.

To overcome this limitation, the next section introduces ShiftLIF, which replaces the binary spike output of conventional LIF neurons with a power-of-two multi-level spike representation while preserving hardware-friendly computation.

% This raises a key challenge: how can one increase spike-level representational capacity without sacrificing the hardware efficiency of binary synaptic computation? The next section addresses this question by introducing ShiftLIF, which replaces the binary spike output of conventional LIF neurons with a power-of-two multi-level spike representation while preserving hardware-friendly computation.

\subsection{ShiftLIF}
The design principle of ShiftLIF is to \textbf{break the binarized representation bottleneck while preserving the hardware efficiency of standard LIF neurons}. To this end, ShiftLIF retains the membrane integration dynamics of the conventional LIF neuron and modifies only the spike generation and reset processes. 

\subsubsection{Shift-Quantized Activation}
The central challenge in reformulating spike generation is to increase spike-level representational bandwidth without sacrificing hardware efficiency. Existing multi-level spiking neurons typically rely on uniformly spaced quantization levels. While such designs improve amplitude representation, they generally require synaptic multiplications and therefore do not naturally preserve the multiplier-free nature of standard LIF computation. In addition, uniform quantization allocates equal resolution across the full dynamic range, which is not well matched to the non-uniform distribution of membrane potentials that typically cluster near small values~\cite{li2019additive,przewlocka2022power,guo2022recdis,guo2023rmp}.

To address both issues, ShiftLIF replaces the standard Heaviside step function with a shift-quantization operator \(Q_{\text{shift}}\), which maps the continuous membrane potential to a discrete power-of-two spike set. This design naturally produces a logarithmically spaced output space, providing finer resolution for small activations while preserving shift-friendly computation. Without loss of generality, we normalize the firing threshold to \(V_{\mathrm{th}} = 1.0\). The quantization operator is defined as
\begin{equation}
S_t = Q_{\text{shift}}(V_t), \quad
Q_{\text{shift}}(v) =
\begin{cases}
0, & \text{if } v < 2^{-(K+1)}, \\[6pt]
2^{-\lceil -\log_2 v \rceil}, & \text{if } 2^{-(K+1)} \leq v \leq 1,
\end{cases}
\end{equation}
where \(v = \max(0, \min(V_t, 1))\) denotes the bounded membrane potential, and \(K \in \mathbb{N}\) controls the smallest non-zero spike magnitude. The corresponding output alphabet is
\begin{equation}
\mathcal{S} = \{0,\; 2^{-K},\; 2^{-(K-1)},\; \dots,\; 2^{-1},\; 1\}, 
\qquad |\mathcal{S}| = K + 2.
\end{equation}

For example, setting \(K=7\) yields 9 discrete spike levels spanning multiple orders of magnitude, enabling substantially richer amplitude encoding than standard binary spikes.

\subsubsection{Logarithmic Quantization}
The quantization grid of \(Q_{\text{shift}}\) is logarithmically spaced as the bin corresponding to the output \(2^{-k}\), defined on the interval \([2^{-(k+1)}, 2^{-k})\), has width \(2^{-(k+1)}\). This width shrinks exponentially as \(k\) increases, i.e., as activation values approach zero. In contrast, the uniform grid used in INT-LIF assigns a constant bin width \(\Delta = 1/N\) across the full dynamic range.

This non-uniform spacing is motivated by the membrane-potential distribution in SNNs. Due to the continuous leakage dynamics, membrane potentials are typically concentrated near small values rather than being uniformly distributed over the full range. Prior studies on activation quantization and recent analyses of deep SNNs have shown that such non-uniform distributions are closely related to quantization error ~\cite{li2019additive,przewlocka2022power,guo2022recdis,guo2023rmp}. As further illustrated by our empirical measurements across multiple sensing modalities in Figure~\ref{fig:quant_analysis}, the membrane potentials in trained models consistently cluster near zero.

For such non-uniform distributions, uniformly spaced quantization levels allocate equal resolution to both dense and sparse regions, which can under-represent the small-amplitude regime where membrane potentials occur most frequently. By contrast, the logarithmic power-of-two grid in ShiftLIF provides finer resolution near zero while maintaining coarser spacing for rare large values. This makes the quantization grid better aligned with the empirical membrane-potential distribution and naturally supports shift-based computation. Section~\ref{sec:theory} further analyzes the resulting representational properties.

After spike generation, the membrane potential must be reset. To retain the residual sub-threshold charge that is not encoded into the current spike, ShiftLIF employs a proportional soft-reset mechanism:
$$
V_t \leftarrow V_t - S_t \cdot V_{\text{th}}.
$$
Compared with the standard LIF soft reset, which subtracts a fixed threshold after each spike, ShiftLIF decrements the membrane potential proportionally to the emitted multi-level spike magnitude \(S_t \in \mathcal{S}\). This preserves residual membrane charge across time steps and reduces information loss caused by hard state truncation.

The complete forward dynamics of a ShiftLIF neuron at a single time step are summarized in Algorithm~\ref{algo:1}. 

\begin{algorithm}[H]
\caption{ShiftLIF Forward Pass}
\label{algo:1}
\textbf{Input:} Current input $X_t$ at timestep $t$, membrane potential $V_{t-1}$ \\
\textbf{Hyperparameters:} Time constant $\tau$, threshold $V_{\text{th}}=1.0$, resting potential $V_{\text{reset}}=0.0$, precision factor $K$
\begin{algorithmic}[1]
\STATE \textbf{1. Leaky Integration}
\STATE $V_t \leftarrow V_{t-1} + \frac{1}{\tau} (V_{\text{reset}} - V_{t-1}) + X_t$
\STATE \textbf{2. Bounding (Clamp)}
\STATE $v \leftarrow \max(0, \min(V_t, V_{\text{th}}))$
\STATE \textbf{3. Shift-Quantized Activation}
\IF{$v < 2^{-(K+1)}$}
    \STATE $S_t \leftarrow 0$
\ELSE
    \STATE $k \leftarrow \lceil -\log_2 v \rceil$
    \STATE $S_t \leftarrow 2^{-\min(k, K)}$
\ENDIF
\STATE \textbf{4. Proportional Soft Reset}
\STATE $V_t \leftarrow V_t - S_t \cdot V_{\text{th}}$
\STATE \textbf{Return} $S_t, V_t$
\end{algorithmic}
\end{algorithm}

For continuous sequence processing, this forward pass is iteratively applied over $t = 1, \dots, T$, updating the membrane state and accumulating the output spike train $\{S_t\}_{t=1}^T$.

\begin{figure*}[t]
    \centering
    \includegraphics[width=\textwidth]{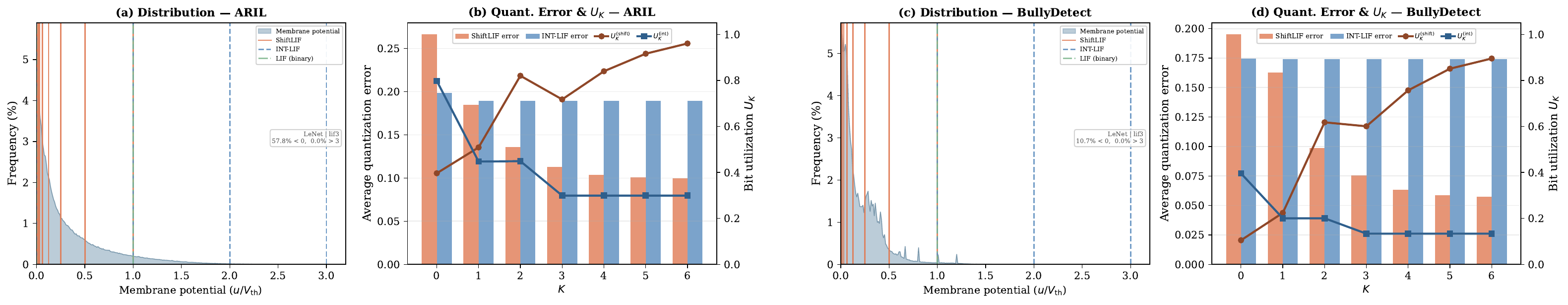}
    \caption{
    Membrane potential distribution, quantization error, and bit utilization metric on sensing datasets. 
    % Standard LIF membrane potentials are concentrated in the low-amplitude regime. 
    Compared with INT-LIF, ShiftLIF yields lower quantization error and better bit utilization level as $K$ increases.
    }
    \label{fig:quant_analysis}
\end{figure*}

\subsection{Theoretical Analysis}
\label{sec:theory}
% The design of ShiftLIF raises two central theoretical questions: \textbf{how multi-level spikes improve communication capacity, and how logarithmic power-of-two quantization maintains precision for small membrane values.} In this section, we address these questions through an analysis of information-theoretic capacity and relative quantization error.

We next compare ShiftLIF with INT-LIF~\cite{luo2024integer}, a representative multi-level spiking baseline based on uniformly spaced integer levels. Standard binary LIF is recovered as the special case \(K=0\) of INT-LIF, so the comparison with LIF is already covered by this formulation and is not discussed separately here. Let \(X\) denote a nonnegative random variable representing the membrane potential normalized by the firing threshold, and let \(v\) denote a generic realization of \(X\). We use \(X\) in distribution-level expressions such as probabilities and expectations, and \(v\) in pointwise definitions of the quantizers. Let \(K\) be the precision factor introduced in Section~3.2.1. To match the same output budget, we compare the two level sets
\begin{equation}
\begin{aligned}
\mathcal{S}_{\text{shift}} &= \{0, 2^{-K}, 2^{-(K-1)}, \dots, 1\},\\
\mathcal{S}_{\text{int}} &= \{0,1,2,\dots,K+1\},
\end{aligned}
\end{equation}
which both contain \(K+2\) discrete outputs. INT-LIF, and hence standard LIF as its \(K=0\) special case, follows the nearest-level rule on a uniform integer grid. ShiftLIF instead selects the largest admissible dyadic level not exceeding the membrane value; this preserves power-of-two spike magnitudes and can be implemented efficiently with bit-shift operations, making it more hardware-friendly. Accordingly, the ShiftLIF quantizer can be interpreted as
\begin{equation}
Q_{\text{shift}}(v)=\max\{s\in\mathcal{S}_{\text{shift}}:s\le v\}.
\end{equation}
The INT-LIF quantizer is defined by
\begin{equation}
Q_{\text{int}}(v)=
\begin{cases}
0, & 0\le v<\tfrac{1}{2},\\
j, & j-\tfrac{1}{2}\le v<j+\tfrac{1}{2}, \quad j=1,\dots,K,\\
K+1, & v\ge K+\tfrac{1}{2}.
\end{cases}
\end{equation}
% The explicit form of \(Q_{\text{shift}}\) is
% \begin{equation}
% Q_{\text{shift}}(v)=
% \begin{cases}
% 0, & 0\le v<2^{-K},\\
% 2^{-k}, & 2^{-k}\le v<2^{-k+1}, \quad k=1,\dots,K,\\
% 1, & v\ge 1.
% \end{cases}
% \end{equation}

\subsubsection{Quantization Error Analysis}

Assuming \(\mathbb{E}[X]<\infty\), we first analyze the expected absolute quantization error
\begin{equation}
\mathcal{E}_A=\mathbb{E}\!\left[\left\lvert X-Q_A(X)\right\rvert\right], \qquad A\in\{\text{shift},\text{int}\}.
\end{equation}
For the sensing applications considered in this paper, the membrane distributions are not naturally uniform on the original linear scale. Instead, as illustrated by the empirical measurements in Figure~\ref{fig:quant_analysis}, they are concentrated near zero while remaining spread across multiple logarithmic scales. In this regime, the dyadic grid of ShiftLIF is a more natural match to the underlying distribution, because it allocates finer resolution to the low-value region where most membrane mass is concentrated while still preserving multiple spike levels across scales. The lemma below gives a simple sufficient condition under which this allocation yields a smaller expected absolute error than INT-LIF.

\begin{lemma}
The expected absolute quantization error \(\mathcal{E}_{\text{shift}}\) is strictly less than \(\mathcal{E}_{\text{int}}\) if
\begin{equation}
2^{-K}\Pr(2^{-K}\le X<\tfrac{1}{2})
>
\tfrac{1}{2}\Pr(\tfrac{3}{4}\le X<1)
+K\Pr(X\ge \tfrac{3}{2}).
\end{equation}
\end{lemma}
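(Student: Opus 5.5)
The plan is to compare the two quantizers pointwise and then integrate. Define the error gap
\[
D(v)=\lvert v-Q_{\text{int}}(v)\rvert-\lvert v-Q_{\text{shift}}(v)\rvert, \qquad v\ge 0,
\]
so that \(\mathcal{E}_{\text{int}}-\mathcal{E}_{\text{shift}}=\mathbb{E}[D(X)]\). Both errors are bounded by \(X+K+1\), so the assumption \(\mathbb{E}[X]<\infty\) makes every expectation finite and the subtraction legitimate. Throughout I use \(Q_{\text{shift}}(v)=\max\{s\in\mathcal{S}_{\text{shift}}:s\le v\}\), so \(Q_{\text{shift}}(v)=1\) for all \(v\ge 1\). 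I will partition \([0,\infty)\) at the breakpoints \(2^{-K},\tfrac12,\tfrac34,1,\tfrac32\) and bound \(D\) from below on each piece. The goal is the estimate
\[
D(v)\;\ge\;2^{-K}\mathbf{1}[2^{-K}\le v<\tfrac12]\;-\;\tfrac12\mathbf{1}[\tfrac34\le v<1]\;-\;K\,\mathbf{1}[v\ge\tfrac32].
\]
Taking expectations then gives \(\mathbb{E}[D(X)]\) at least the left side minus the right side of the hypothesis, which is strictly positive.

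The case checks are as follows.
\begin{itemize}
\item \(v<2^{-K}\). Both quantizers output \(0\), so \(D=0\).
\item \(2^{-K}\le v<\tfrac12\). \(Q_{\text{int}}(v)=0\) gives error \(v\). \(Q_{\text{shift}}(v)\le v\) gives error \(v-Q_{\text{shift}}(v)\). Hence \(D=Q_{\text{shift}}(v)\ge 2^{-K}\). This is the gain term.
\item \(\tfrac12\le v<1\). \(Q_{\text{shift}}=\tfrac12\) and \(Q_{\text{int}}=1\), so \(D=(1-v)-(v-\tfrac12)=\tfrac32-2v\). This is nonnegative on \([\tfrac12,\tfrac34]\) and at least \(-\tfrac12\) on \([\tfrac34,1)\).
\item \(1\le v<\tfrac32\). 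Both quantizers output \(1\), so \(D=0\).
\item \(v\ge\tfrac32\). The shift error is \(v-1\). If \(v<K+\tfrac12\), the INT error is nonnegative, so \(D\ge 1-v>\tfrac12-K\ge -K\). If \(v\ge K+\tfrac12\), then \(Q_{\text{int}}=K+1\) and \(D=\lvert v-K-1\rvert-(v-1)\ge -K\) by the triangle inequality. The degenerate case \(K=0\) is covered as well, since then \(D=0=-K\) on this range.
\end{itemize}

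The main obstacle is the last region. There the shift error grows linearly in \(v\), while the INT error stays bounded until the integer grid saturates. One has to notice that the loss is nevertheless uniformly bounded by \(K\), because the shift grid stops at \(1\) and the INT grid stops at \(K+1\). The constant \(K\) in the hypothesis is exactly this bound. Once these pointwise bounds are in place, linearity of expectation and the stated inequality finish the proof.
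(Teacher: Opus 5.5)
Your proposal is correct and follows the paper's proof essentially verbatim: you use the same pointwise gap function, prove the same three-indicator lower bound, and take expectations. Your interval-by-interval checks supply the details the paper only asserts, with one small caveat: those checks implicitly assume $K\ge 1$. For $K=0$ the shift grid has no level $\tfrac12$, so $D(v)=1-2v<0$ on $(\tfrac12,\tfrac34)$ and the pointwise bound fails there. Nothing is lost, however, because the hypothesis has zero left-hand side whenever $K\in\{0,1\}$, so the lemma is vacuous in those cases.
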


% {\color{purple} todo: visualization of quantization error in real data}

% {\color{gray}

% The proof can be moved to supplementary materials.

\begin{proof}
Define
\begin{equation}
\Delta(v)=\left\lvert v-Q_{\text{int}}(v)\right\rvert-\left(v-Q_{\text{shift}}(v)\right), \quad v\ge 0.
\end{equation}
Since \(Q_{\text{shift}}(v)\le v\) for all \(v\ge 0\), we have
\begin{equation}
\left\lvert v-Q_{\text{shift}}(v)\right\rvert=v-Q_{\text{shift}}(v).
\end{equation}
A direct interval-by-interval comparison of the two quantizers yields the pointwise lower bound
\begin{equation}
\Delta(v)\ge 2^{-K}\mathbf{1}_{\{2^{-K}\le v<\tfrac{1}{2}\}}-\tfrac{1}{2}\mathbf{1}_{\{\tfrac{3}{4}\le v<1\}}-K\mathbf{1}_{\{v\ge \tfrac{3}{2}\}},
\quad v\ge 0.
\end{equation}
Indeed, on \([2^{-K},\tfrac{1}{2})\), one has \(\Delta(v)\ge 2^{-K}\). On \([\tfrac{3}{4},1)\), one has \(\Delta(v)\ge -\tfrac{1}{2}\). On \([\tfrac{3}{2},\infty)\), all cases satisfy \(\Delta(v)\ge -K\). On the remaining region, the right-hand side is nonpositive while the inequality is immediate. Taking expectations yields
\begin{equation}
\begin{aligned}
\mathbb{E}[\Delta(X)&]
\ge
2^{-K}\Pr(2^{-K}\le X<\tfrac{1}{2}) \\
&-\tfrac{1}{2}\Pr(\tfrac{3}{4}\le X<1)
-K\Pr(X\ge \tfrac{3}{2}).
\end{aligned}
\end{equation}
Since
\begin{equation}
\mathcal{E}_{\text{shift}}-\mathcal{E}_{\text{int}}=-\mathbb{E}[\Delta(X)],
\end{equation}
the stated condition implies \(\mathbb{E}[\Delta(X)]>0\), which is equivalent to \(\mathcal{E}_{\text{shift}}<\mathcal{E}_{\text{int}}\).
\end{proof}

% }

% The proof is provided in the supplementary materials.
A direct comparison of the two quantizers shows that ShiftLIF is favored on \([2^{-K},\tfrac{3}{4})\), the two quantizers are tied on \([0,2^{-K})\cup[1,\tfrac{3}{2})\), and INT-LIF is favored on \([\tfrac{3}{4},1)\cup[\tfrac{3}{2},\infty)\). Consequently, the distribution of probability mass within the unit interval matters, not just the aggregate mass \(\Pr(2^{-K}\le X<1)\).
The advantage in quantization error of ShiftLIF on real sensing datasets is evident in Figure~\ref{fig:quant_analysis}.

\subsubsection{Information-Theoretic Capacity}

We now analyze how effectively the two quantizers use the same \(K+2\) output levels. We measure this through the normalized output entropy relative to the bit budget required to encode these levels:
\begin{equation}
U_K^{(A)}=\frac{H(Q_A(X))}{\lceil \log_2(K+2)\rceil}, \qquad A\in\{\text{shift},\text{int}\}.
\end{equation}
Here, \(\lceil \log_2(K+2)\rceil\) is the number of bits needed to encode the \(K+2\) quantization levels. Thus, \(U_K^{(A)}\) measures how effectively those bits are utilized: larger values, especially those closer to \(1\), indicate better use of the available coding capacity and are therefore preferred. For a discrete random variable \(Y\) with probability mass function \(p_Y\), we define
\begin{equation}
H(Y)=-\sum_y p_Y(y)\log_2 p_Y(y),
\end{equation}
with the convention \(0\log_2 0:=0\). For a finite probability vector such as \(R\), \(V\), or \(T\), \(H(R)\), \(H(V)\), and \(H(T)\) are defined by the same formula. For the entropy comparison, the key split occurs at \(\tfrac{1}{2}\): ShiftLIF preserves variation in the low-amplitude region by distributing the mass below \(\tfrac{1}{2}\) across dyadic shells, which can increase output entropy when membrane values are concentrated near zero. By contrast, INT-LIF collapses all mass below \(\tfrac{1}{2}\) to the single output \(0\), and only distributes mass across uniform cells above \(\tfrac{1}{2}\). The next lemma makes this entropy decomposition explicit.

\begin{lemma}
Let \(r=\Pr(X<\tfrac{1}{2})\). Let \(R\), \(V\), and \(T\) denote the conditional output distributions of ShiftLIF below \(\tfrac{1}{2}\), of ShiftLIF on \([\tfrac{1}{2},\infty)\), and of rounded INT-LIF on \([\tfrac{1}{2},\infty)\), respectively. Then
\begin{equation}
\begin{aligned}
H(Q_{\text{shift}}(X)) &= h_2(r)+rH(R)+(1-r)H(V),\\
H(Q_{\text{int}}(X)) &= h_2(r)+(1-r)H(T),
\end{aligned}
\end{equation}
where \(h_2(r)=-r\log_2 r-(1-r)\log_2(1-r)\) is the binary entropy. Consequently,
\begin{equation}
U_K^{(\text{shift})}>U_K^{(\text{int})}
\iff
rH(R)+(1-r)H(V)>(1-r)H(T).
\end{equation}
The same formulas remain valid in the degenerate cases \(r=0\) and \(r=1\), with the absent conditional-entropy term interpreted as zero.
\end{lemma}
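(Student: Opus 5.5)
The approach is the grouping (chain) rule for entropy, applied to the indicator of the event $\{X<\tfrac12\}$. Both quantizers are deterministic functions of $X$. Each also determines this indicator through its output value. For ShiftLIF, the outputs below $\tfrac12$ are exactly the levels $\{0,2^{-K},\dots,2^{-2}\}$, since $Q_{\text{shift}}(v)\le v<\tfrac12$. The outputs on $[\tfrac12,\infty)$ are $\{2^{-1},1\}$, since $Q_{\text{shift}}(v)\ge \tfrac12$ there. For INT-LIF, the output is $0$ exactly when $v<\tfrac12$, and is $\ge 1$ otherwise. Hence with $B=\mathbf{1}_{\{X<1/2\}}$ we have $B=f_A(Q_A(X))$ for a deterministic $f_A$, and so $H(Q_A(X))=H(Q_A(X),B)$.

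\textbf{Step 1: apply the chain rule.} Expanding the joint entropy gives
\[
H(Q_A(X),B)=H(B)+\Pr(B=1)H(Q_A(X)\mid B=1)+\Pr(B=0)H(Q_A(X)\mid B=0).
\]
Here $H(B)=h_2(r)$, and the conditional distributions are by definition $R$ and $V$ for ShiftLIF. For INT-LIF they are the point mass at $0$ and $T$. A point mass has zero entropy, which yields the two displayed formulas.

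\textbf{Step 2: handle the degenerate cases.} When $r\in\{0,1\}$, one conditional distribution is undefined. Its weight is zero, and $h_2(r)=0$ under the convention $0\log_2 0=0$. The formulas then hold with that term read as $0$. As a sanity check, I will also verify this by a direct pmf computation: write $p_s=\Pr(Q=s)$ and split the sum defining $H$ over the two disjoint groups of output levels. This gives the same identity without conditioning language, and it covers the degenerate cases uniformly.

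\textbf{Step 3: the equivalence.} Both quantizers are normalized by the same positive quantity $\lceil\log_2(K+2)\rceil$. Comparing $U_K^{(\text{shift})}$ with $U_K^{(\text{int})}$ is therefore equivalent to comparing the two entropies. After cancelling the common $h_2(r)$, this is exactly the comparison $rH(R)+(1-r)H(V)>(1-r)H(T)$.

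\textbf{Main obstacle.} The only delicate point is making sure the output level sets of the two groups are disjoint for each quantizer, so that $B$ really is a function of the output. For ShiftLIF this requires the observation that $2^{-1}$ is produced exactly on $[\tfrac12,1)$. I will check this against both the max-definition $Q_{\text{shift}}(v)=\max\{s\in\mathcal{S}_{\text{shift}}:s\le v\}$ and its boundary conventions, including values of $v$ below $2^{-K}$, which map to $0$ and stay in the lower group.
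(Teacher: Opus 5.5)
Your proposal is correct and takes essentially the same route as the paper. Both proofs condition on the event $\{X<\tfrac12\}$ and apply the chain rule to obtain the two entropy identities. Both then cancel the common $h_2(r)$ and the common positive normalizer $\lceil\log_2(K+2)\rceil$, and treat $r\in\{0,1\}$ separately. Your explicit check that $\mathbf{1}_{\{X<1/2\}}$ is a function of each quantizer's output justifies a step the paper leaves implicit.

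The one thing to state outright is that this check, which needs $2^{-1}\in\mathcal{S}_{\text{shift}}$, requires $K\ge 1$. For $K=0$ the level set is $\{0,1\}$, so inputs in $[\tfrac12,1)$ also map to $0$ and the ShiftLIF identity fails. The paper's proof makes the same assumption tacitly.
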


The proof is provided in the supplementary materials.
This criterion shows that ShiftLIF is favored when a substantial fraction of the mass lies below \(\tfrac{1}{2}\) and is well spread across the dyadic shells there. INT-LIF is only favored when the mass above \(\tfrac{1}{2}\) is broadly spread across the nearest-integer cells. 
In Figure~\ref{fig:quant_analysis}, we see that ShiftLIF behaves well in terms of bit utilization in sensing datasets.

\subsection{Training Procedure}
\paragraph{Surrogate Gradients}
The shift-quantization operator \(Q_{\mathrm{shift}}\) is piecewise constant and therefore has zero derivative almost everywhere, which prevents direct backpropagation. To enable end-to-end training, we adopt a straight-through estimator (STE) that is matched to the support of the ShiftLIF quantizer.

In the forward pass, the spike output is generated by
\begin{equation}
s_t = Q_{\mathrm{shift}}(u_t).
\end{equation}
Since \(Q_{\mathrm{shift}}\) is defined on the bounded interval \([0,1]\), we approximate its derivative in the backward pass by
\begin{equation}
\frac{\partial s_t}{\partial u_t}
\approx
\mathbf{1}_{[0 \le u_t \le 1]}.
\end{equation}
Accordingly, the gradient of the loss is propagated as
\begin{equation}
\frac{\partial \mathcal{L}}{\partial u_t}
\approx
\frac{\partial \mathcal{L}}{\partial s_t}
\cdot
\mathbf{1}_{[0 \le u_t \le 1]}.
\end{equation}
That is, gradients are passed through unchanged within the quantization support and set to zero outside it.

This choice is consistent with the STE framework widely used in quantized and spiking neural networks, while being specifically adapted to the multi-level support of ShiftLIF. In contrast to binary surrogate functions that primarily provide a gradient around a single firing threshold, the proposed STE supplies a non-zero gradient over the full activation range of the quantizer, allowing all active quantization regions to participate in optimization.

\begin{table*}[t]
\centering
\caption{Comparison of different spiking neuron models across low-filtering wireless sensing, acoustic sensing, motion sensing, and neuromorphic sensing benchmarks. Results are reported as accuracy (\%). Best and second-best results are highlighted in bold and underlined, respectively.}
\label{tab:main_results}
\small
\begin{tabular*}{\textwidth}{@{\extracolsep{\fill}}lccccccccccc@{}}
\toprule
\multirow{2}{*}{Method}
& \multicolumn{4}{c}{Low-filtering Wireless Sensing}
& \multicolumn{2}{c}{Acoustic Sensing}
& \multicolumn{2}{c}{Motion Sensing}
& \multicolumn{2}{c}{Neuromorphic Sensing}
& \multirow{2}{*}{Avg.} \\
\cmidrule(lr){2-5}
\cmidrule(lr){6-7}
\cmidrule(lr){8-9}
\cmidrule(lr){10-11}
& ARIL & UT-HAR & Fi-Humanid & BullyDetect
& UrbanSound8K & GSC
& UCIHAR & HHAR
& CIFAR10-DVS & DVS-Gesture & \\
\midrule
CLIF
& \underline{92.09} & \underline{99.20} & \underline{99.45} & 81.38
& \textbf{83.06} & 89.54
& \underline{93.86} & \underline{91.77}
& \underline{63.65} & \underline{93.40}
& \underline{88.74} \\

GLIF
& 89.93 & 98.80 & 98.17 & 78.26
& 79.01 & 90.31
& 92.74 & 91.28
& 63.15 & \underline{93.40}
& 87.50 \\

INTLIF
& 89.57 & \textbf{99.40} & 98.17 & 79.69
& 79.67 & 90.51
& 93.28 & 91.68
& 62.15 & \textbf{95.14}
& 87.93 \\

LIF
& 91.01 & 97.09 & 99.08 & 67.28
& 79.37 & 84.90
& 83.71 & 81.02
& 46.65 & 89.93
& 82.00 \\

PLIF
& 91.73 & 98.90 & 98.72 & 79.91
& 82.76 & 90.39
& 93.25 & 91.71
& \textbf{64.35} & 93.06
& 88.48 \\

PSN
& 85.61 & 96.89 & 99.27 & 63.39
& 76.29 & 89.31
& 92.91 & 90.50
& 56.65 & 91.32
& 84.21 \\

TLIF
& 88.13 & 96.39 & \underline{99.45} & 15.63
& 78.52 & 52.39
& 91.52 & 69.80
& 22.85 & 84.03
& 69.87 \\

ILIF
& 91.37 & \textbf{99.40} & 99.08 & \textbf{82.37}
& 81.97 & \underline{90.70}
& 93.69 & 91.36
& \textbf{64.35} & 92.01
& 88.63 \\

RPLIF
& 87.77 & 97.39 & 99.27 & 67.68
& 80.94 & 85.59
& 91.25 & 81.71
& 45.05 & 88.19
& 82.48 \\

\midrule
\textbf{ShiftLIF}
& \textbf{92.81} & \textbf{99.40} & \textbf{99.63} & \underline{82.14}
& \underline{83.00} & \textbf{91.31}
& \textbf{94.30} & \textbf{94.75}
& \underline{63.65} & 92.36
& \textbf{89.34} \\
\bottomrule
\end{tabular*}
\end{table*}

\paragraph{Spike rate regularization.}
To further control the firing activity during training, we add a spike rate regularizer to the task loss:
\begin{equation}
\mathcal{L}=\mathcal{L}_{\mathrm{CE}}+\lambda_{\mathrm{sr}}\mathcal{L}_{\mathrm{sr}},
\end{equation}
where
\begin{equation}
\mathcal{L}_{\mathrm{sr}}=\frac{1}{L}\sum_{\ell=1}^{L}\left[\mathrm{mean}\!\left(|\mathcal{S}^{(\ell)}|\right)-r^*\right]^+.
\end{equation}
This term penalizes only layers whose average spike magnitude exceeds the target \(r^*\), thereby reducing excessive firing activity without suppressing already sparse responses. We use an L1-style penalty so that the regularization signal remains stable near the target rate.

\subsection{Hardware Computation}
A key motivation for adopting a power-of-two spike alphabet is to avoid the general multiplication overhead introduced by conventional multi-level spiking neurons. In standard synaptic computation, the pre-synaptic input to neuron \(i\) at time step \(t\) is given by
\begin{equation}
x_i^{(t)} = \sum_j W_{ij} s_j^{(t)},
\end{equation}
where \(W_{ij}\) denotes the synaptic weight and \(s_j^{(t)}\) is the pre-synaptic spike. For conventional multi-level SNNs, \(s_j^{(t)}\) typically takes arbitrary integer or floating-point values, so evaluating \(W_{ij}s_j^{(t)}\) generally requires standard multiplication.

In ShiftLIF, all non-zero spikes are constrained to the form
\begin{equation}
s_j^{(t)} = 2^{-k}, \qquad k \in \{0,1,\dots,K\}.
\end{equation}
Under integer or fixed-point weight representations, the corresponding synaptic product can be implemented as a right-shift operation,
\begin{equation}
W_{ij} s_j^{(t)} = W_{ij} \cdot 2^{-k} \;\;\longrightarrow\;\; W_{ij} \gg k,
\end{equation}
where \(\gg\) denotes arithmetic right shift. When \(s_j^{(t)}=0\), the synaptic update is skipped entirely, preserving the event-driven sparsity of spiking computation.

As a result, ShiftLIF replaces MAC(multiply-accumulate) operations in multi-level synaptic interactions with shift-based accumulation. This preserves the hardware efficiency of standard binary SNN computation while enabling richer spike representations. In addition, the reduced effective bit width of shifted operands can make the subsequent accumulation more efficient in practical integer implementations. Compared with conventional multi-level spike designs, ShiftLIF therefore provides a more favorable trade-off between representational capacity and hardware efficiency.

\begin{table*}[t]
\centering
\caption{Comparison between ShiftLIF and INT-LIF across different backbones on wireless sensing benchmarks. For each dataset, we report the accuracy (\%) of ShiftLIF, the accuracy (\%) of INT-LIF, and their difference $\Delta$ = ShiftLIF $-$ INT-LIF. The better result between the two neurons is highlighted in bold.}
\label{tab:1}
\setlength{\tabcolsep}{4pt}
\small
\resizebox{\textwidth}{!}{
\begin{tabular}{lccc ccc ccc ccc}
\toprule
\multirow{2}{*}{Backbone}
& \multicolumn{3}{c}{ARIL}
& \multicolumn{3}{c}{UT-HAR}
& \multicolumn{3}{c}{Fi-Humanid}
& \multicolumn{3}{c}{BullyDetect} \\
\cmidrule(lr){2-4} \cmidrule(lr){5-7} \cmidrule(lr){8-10} \cmidrule(lr){11-13}
& ShiftLIF & INT-LIF & $\Delta$
& ShiftLIF & INT-LIF & $\Delta$
& ShiftLIF & INT-LIF & $\Delta$
& ShiftLIF & INT-LIF & $\Delta$ \\
\midrule
SpikingLeNet      & \textbf{92.01} & 89.57 & +2.44 & \textbf{99.40} & \textbf{99.40} & 0.00 & \textbf{99.63} & 98.17 & +1.46 & \textbf{82.14} & 79.69 & +2.45 \\
SpikingVGG9       & \textbf{97.84} & 94.60 & +3.24 & \textbf{99.90} & 99.60 & +0.30 & \textbf{99.63} & 99.45 & +0.18 & \textbf{80.36} & 30.62 & +49.74 \\
SEWResNet34       & 89.21 & \textbf{89.57} & -0.36 & \textbf{99.80} & 99.30 & +0.50 & \textbf{99.45} & \textbf{99.45} & 0.00 & \textbf{79.55} & 58.53 & +21.02 \\
MSResNet34        & \textbf{93.17} & 88.85 & +4.32 & \textbf{99.60} & 99.40 & +0.20 & \textbf{99.45} & 99.08 & +0.37 & \textbf{80.13} & 77.86 & +2.27 \\
Spikformer256     & \textbf{94.96} & 89.57 & +5.39 & \textbf{99.70} & 98.80 & +0.90 & 99.63 & \textbf{99.82} & -0.19 & \textbf{85.94} & 55.54 & +30.40 \\
MetaSpikformer256 & \textbf{94.24} & 92.45 & +1.79 & \textbf{99.80} & 99.70 & +0.10 & \textbf{99.63} & 99.08 & +0.55 & \textbf{77.59} & 67.99 & +9.60 \\
QKformer256       & \textbf{96.04} & 89.21 & +6.83 & \textbf{99.90} & 99.10 & +0.80 & 99.63 & \textbf{99.82} & -0.19 & \textbf{74.82} & 59.51 & +15.31 \\
\bottomrule
\end{tabular}
}
\end{table*}

\section{Experiment}
\subsection{Setup}
\subsubsection{Dataset}
To evaluate ShiftLIF, we benchmark across 10 datasets with four distinct sensing modalities: (1) \textbf{Wireless Sensing:} ARIL~\cite{wang2019joint}, UT-HAR~\cite{yang2023sensefi}, Fi-HumanID~\cite{yang2023sensefi}, and BullyDetect~\cite{lan2024bullydetect} for WiFi-based human activity and identification tasks; (2) \textbf{Acoustic Sensing:} UrbanSound8K~\cite{salamon2014dataset} (10-fold cross-validation) and Google Speech Commands v2 (GSC, 35-class)~\cite{warden2018speech}; (3) \textbf{Motion Sensing:} UCI-HAR~\cite{anguita2013public} and HHAR~\cite{stisen2015smart} for IMU-based activity recognition; and (4) \textbf{Neuromorphic Vision:} CIFAR10-DVS~\cite{li2017cifar10} and DVS128-Gesture~\cite{amir2017low}.
These datasets include continuous signals that require initial encoding as well as data from native event sensors. This variety allows us to evaluate the proposed neuronal dynamics under different spatial and temporal distributions. Continuous motion signals are segmented using a sliding window with a size of 128 and a stride of 64, while acoustic waveforms are transformed into 128-dimensional Mel-spectrograms.

\subsubsection{Besealine Neurons}
We compare ShiftLIF against nine spiking neuron models: the standard binary 
LIF \cite{neftci2019surrogate},
PLIF \cite{fang2021incorporating}, 
GLIF \cite{yao2022glif},
CLIF \cite{huang2024clif},
PSN \cite{fang2023parallel},
TLIF \cite{guo2024ternary},
I-LIF \cite{sun2025ilif},
RPLIF \cite{li2025incorporating},
and INT-LIF \cite{luo2024integer}.
Among these, INT-LIF is the most direct comparator as it also provides multi-level spike outputs, while the remaining baselines emit binary $\{0,1\}$ or ternary $\{-1,0,1\}$ spikes.

\subsubsection{Implementation} The comparison of different neurons is done with an identical spiking LeNet~\cite{lecun2002gradient} backbone. Models are trained from scratch for 150 epochs using the Adam optimizer with the learning rate of $10^{-3}$ and a cosine annealing schedule. The temporal simulation window is fixed to $T=4$ time steps across all tasks. The variants of spike neurons are implemented with spikingjelly~\cite{fang2023spikingjelly}.

For ShiftLIF, we set the precision factor to $K=2$, defining a 2-level spike alphabet $\mathcal{S} = \{0, 2^{-2}, 2^{-1}, 1\}$. To ensure a perfectly fair comparison, the fundamental dynamic parameters are consistently aligned across all evaluated neuron models: the firing threshold is $V_{\text{th}}=1.0$, the reset potential is $V_{\text{reset}}=0.0$, and the membrane time constant is $\tau=2.0$. All experiments are executed on a single NVIDIA GPU with a fixed random seed.

\subsection{Comparison Among Spiking Neurons}

Table~\ref{tab:main_results} compares ShiftLIF with standard LIF, dynamic enhanced variants, and existing multi-level spiking neurons on ten sensing benchmarks. ShiftLIF achieves the best average accuracy, reaching \(89.34\%\), surpassing the \(88.74\%\) and \(88.63\%\) of the strongest competing baselines, CLIF and I-LIF, and improving substantially over \(82.00\%\) of the standard LIF.

Its advantage is most evident on continuous sensing tasks. Across the eight benchmarks in wireless, acoustic, and motion sensing, ShiftLIF ranks first on ARIL, Fi-Humanid, GSC, UCIHAR, and HHAR, and ties for first on UT-HAR. It also achieves the second-best result on BullyDetect and UrbanSound8K. On neuromorphic sensing benchmarks, it is less dominant but remains competitive, reaching \(63.65\%\) on CIFAR10-DVS and \(92.36\%\) on DVS-Gesture. Overall, these results show that ShiftLIF is particularly effective on continuous sensing data while maintaining competitive performance on the other event-based tasks.

\subsection{Comparison Across Backbones}
We further compare ShiftLIF and INT-LIF across a diverse set of backbones, including CNN, ResNet, and Transformer-style architectures~\cite{simonyan2014very, fang2021deep, hu2024advancing, zhou2022spikformer, zhou2024qkformer}. As shown in Table~\ref{tab:1}, ShiftLIF outperforms INT-LIF on the majority of backbone--dataset pairs. The improvement is especially consistent on ARIL, UT-HAR, and BullyDetect, where ShiftLIF is better for nearly all available backbones. On Fi-Humanid, the two neurons are close, but ShiftLIF still achieves equal or higher accuracy in most cases. Only a small number of cases show no gain or a slight decrease, such as SEWResNet34 on ARIL and QKformer256 on Fi-Humanid.

Overall, these results indicate that the advantage of ShiftLIF is not tied to a particular architecture. Instead, it transfers across different backbone families, suggesting that the proposed spike representation is broadly compatible with existing SNN designs.

\subsection{Energy Analysis}
The computational cost of an SNN depends on both the temporal window length $T$ and the average spike rate $s$. A larger $T$ repeats synaptic processing over more timesteps, while a larger $s$ increases the number of effective events that trigger computation and data movement. The energy of a spiking layer is modeled as $E_{\text{total}} = T \cdot s \cdot \left(E_{\text{ACC}} + E_{\text{move}} + E_{\text{weight}}\right)$,
where $E_{\text{ACC}}$ denotes the accumulation energy, $E_{\text{move}}$ denotes the spike movement energy, and $E_{\text{weight}}$ denotes the SRAM weight access energy. All reported energy values are derived from measurements on a commercial 22\,nm process~\cite{yan2024reconsidering}.

Since the temporal window is fixed to $T=4$ for all tasks, the energy differences in Table~\ref{tab:spike_rate} are driven mainly by spike activity and synaptic efficiency. Training with spike rate regularization, ShiftLIF consistently achieves a stronger accuracy--energy trade-off than INT-LIF on the four wireless sensing benchmarks. At the same time, ShiftLIF delivers better accuracy on three of the four datasets. Compared with standard LIF, ShiftLIF also improves accuracy on three datasets while maintaining similar or lower energy in most cases.

\begin{table*}[t]
\centering
\caption{Ablation study on quantization strategy. We compare the proposed logarithmic power-of-two quantization (ShiftLIF) with a uniform multi-level quantization under the same number of spike levels. Results are reported as accuracy (\%). The better result in each column is highlighted in bold.}
\label{tab:3}
\small
\resizebox{\textwidth}{!}{%
\begin{tabular}{lccccccccccc}
\toprule
\multirow{2}{*}{Method}
& \multicolumn{4}{c}{Wireless Sensing}
& \multicolumn{2}{c}{Acoustic Sensing}
& \multicolumn{2}{c}{Motion Sensing}
& \multicolumn{2}{c}{Neuromorphic Sensing}
& \multirow{2}{*}{Avg.} \\
\cmidrule(lr){2-5} \cmidrule(lr){6-7} \cmidrule(lr){8-9} \cmidrule(lr){10-11}
& ARIL & UT-HAR & Fi-HumanID & BullyDetect
& UrbanSound8K & GSC
& UCIHAR & HHAR
& CIFAR10-DVS & DVS-Gesture
&  \\
\midrule
ShiftLIF
& \textbf{92.81} & 99.40 & \textbf{99.63} & \textbf{82.14}
& \textbf{83.00} & 91.31
& 94.30 & \textbf{94.75}
& 63.65 & 92.36
& \textbf{89.34} \\

Uniform
& 89.21 & \textbf{99.50} & 99.27 & 81.25
& 82.21 & \textbf{91.63}
& \textbf{94.37} & 93.51
& \textbf{64.40} & \textbf{95.14}
& 89.05 \\

$\Delta$ (ShiftLIF $-$ Uniform)
& +3.60 & -0.10 & +0.36 & +0.89
& +0.79 & -0.32
& -0.07 & +1.24
& -0.75 & -2.78
& +0.29 \\
\bottomrule
\end{tabular}%
}
\end{table*}

\begin{table}[htbp]
  \centering
  \caption{Spike Rate and Accuracy Comparison Across Datasets and Neuron Types. ShiftLIF* indicates training with spike rate regularization.}                                \label{tab:spike_rate}                                     \begin{tabular}{llccc}
  \toprule            
  Dataset & Neuron & Acc (\%) & Spike Rate & Energy(mJ) \\  \midrule                                                 \multirow{4}{*}{UTHar} 
    & ShiftLIF*  & 98.49 & 0.030 &  3.32  \\
    & LIF             & 97.09 & 0.040 &   4.27   \\ 
    & INTLIF          & 99.40 & 0.046 & 5.44  \\            \midrule   

    \multirow{4}{*}{FiHumanID}                
    & ShiftLIF*   & 98.35 & 0.021 &  2.51  \\ 
    & LIF             & 99.08 & 0.109 &  10.58  \\
    & INTLIF          & 98.17 & 0.203 & 25.71 \\            \midrule                                                              
  \multirow{4}{*}{BullyDetect}         
    & ShiftLIF*   & 82.54 & 0.038 & 4.04 \\
    & LIF             & 67.28 & 0.027 &  3.08  \\ 
    & INTLIF          & 79.69 & 0.080 &  7.97  \\  
    \midrule                                                             
  \multirow{4}{*}{ARIL}             
  & ShiftLIF*  & 92.45 & 0.049 & 5.03  \\
  & LIF       & 91.01 & 0.047 &  4.91  \\ 
  & INTLIF    & 89.57 & 0.151 &  15.19  \\            \bottomrule                                               \end{tabular}                                          \end{table}   

\subsection{Ablation Study}

\paragraph{Effect of precision factor}
We first study the effect of the precision factor $K$, which determines the smallest non-zero spike magnitude $2^{-K}$ and thus the granularity of the spike alphabet. As shown in Figure~\ref{fig3}, the best performance is achieved at moderate values of $K$ rather than increasing monotonically with finer quantization. Specifically, the optimum appears at $K=2$ on ARIL and $K=3$ on BullyDetect, while the coarsest setting $K=1$ gives the worst result on both datasets. This indicates that the improvement of ShiftLIF does not come from simply increasing the number of spike levels. When $K$ becomes larger, the additional bits bring limited benefit. Based on this observation, we use $K=2$ as the default setting in the main experiments.

\paragraph{Logarithmic vs.\ uniform quantization.}
We next compare the proposed logarithmic power-of-two quantization with a uniform multi-level quantization under the same number of spike levels. Table~\ref{tab:3} shows that ShiftLIF achieves better overall performance, with clearer gains on continuous sensing tasks such as ARIL, Fi-HumanID, BullyDetect, UrbanSound8K, and HHAR. This result is consistent with our design motivation: membrane potentials in trained SNNs are typically concentrated in the low-amplitude regime, where the logarithmic grid provides finer resolution than a uniform one. The advantage is less consistent on event-based vision benchmarks, suggesting that the benefit of quantization depends on the underlying signal statistics. Overall, these results support the central design of ShiftLIF: the gain comes not only from using multi-level spikes, but from using a quantization scheme that is better matched to SNN membrane dynamics.

\begin{figure}[t]
    \centering
    \begin{subfigure}[t]{0.48\columnwidth}
        \centering
        \includegraphics[width=\linewidth]{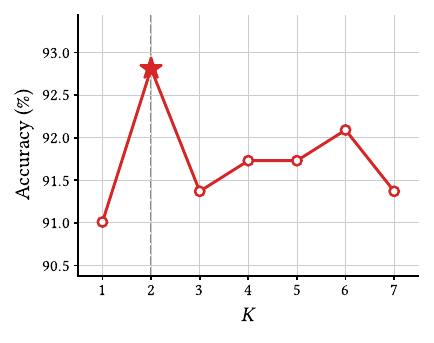}
        \caption{ARIL}
        \label{fig:ablation_k_aril}
    \end{subfigure}
    \hfill
    \begin{subfigure}[t]{0.48\columnwidth}
        \centering
        \includegraphics[width=\linewidth]{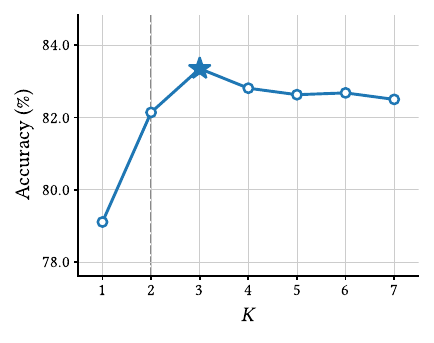}
        \caption{BullyDetect}
        \label{fig:ablation_k_bully}
    \end{subfigure}
    \caption{Ablation on the precision factor $K$ of ShiftLIF on ARIL and BullyDetect. Moderate $K$ values achieve the best accuracy, while increasing $K$ further brings no consistent gain.}
    \label{fig3}
\end{figure}

\section{Discussion}
The results show a clear pattern that ShiftLIF is most effective on continuous sensing tasks, including wireless, acoustic, and motion data, while its advantage is smaller on event-based vision benchmarks. A likely reason is that continuous signals contain useful amplitude variations over time, and these variations are not well preserved by binary spikes. By using a multi-level spike set with finer resolution near small values, ShiftLIF can retain more of this information during neuron-to-neuron communication.

This observation is also consistent with the membrane-potential distributions analyzed in our study. Since a large fraction of membrane states lie in the low-amplitude regime, a logarithmic quantization grid is better matched to the underlying signal statistics than a uniform grid. At the same time, the power-of-two constraint keeps synaptic computation efficient, since spike-weight interactions can still be implemented through shift and accumulation rather than general multiplication. The main benefit of ShiftLIF therefore, comes from combining richer spike representation with a lightweight computation path.

\section{Conclusion}
In this paper, we propose ShiftLIF, a simple multi-level spiking neuron that improves both accuracy and hardware efficiency. By using a logarithmic power-of-two spike set, ShiftLIF better matches the membrane potential distribution and preserves more useful information, especially for small activations. Experiments on ten datasets across four edge-sensing modalities show that ShiftLIF consistently achieves a better accuracy--energy trade-off than existing neuron designs. The gains are especially clear on continuous sensing tasks, where fine signal changes matter. We also show that logarithmic quantization works better than uniform quantization, and that a moderate precision factor $K$ is enough to reach strong performance. Overall, ShiftLIF provides a simple and effective way to improve SNNs.

%In this paper, we propose ShiftLIF, a novel spiking neuron model that resolves the fundamental tension between representational capacity and hardware efficiency in Spiking Neural Networks. By constraining the spike alphabet to a logarithmically spaced, power-of-two grid, ShiftLIF perfectly aligns its quantization resolution with the inherently zero-concentrated distribution of membrane potentials, maximizing information retention in the critical low-amplitude regime. Extensive evaluations across ten datasets and four distinct edge-sensing modalities demonstrate that ShiftLIF consistently delivers a strictly superior accuracy-efficiency trade-off. These representational gains are particularly pronounced in continuous sensing tasks, where preserving fine-grained, sub-threshold signal variations is critical. Furthermore, empirical analyses confirm our theoretical intuitions: logarithmic quantization systematically outperforms uniform alternatives, and a moderate precision factor ($K$) is sufficient to saturate task performance. Ultimately, ShiftLIF establishes that distribution-aware spike discretization is a simple yet highly effective paradigm for advancing neuromorphic computing. Future work will focus on scaling this shift-based mechanism to transformer-level architectures and validating the theoretical energy savings on custom neuromorphic silicon.

\clearpage
\newpage

%%
%% The next two lines define the bibliography style to be used, and
%% the bibliography file.
\bibliographystyle{ACM-Reference-Format}
\bibliography{ref}

\clearpage

%%
%% If your work has an appendix, this is the place to put it.
\appendix
\section{Theoretical Analysis}

\begingroup
\renewcommand{\thetheorem}{3.2}
\begin{lemma}
Let \(r=\Pr(X<\tfrac{1}{2})\). Let \(R\), \(V\), and \(T\) denote the conditional output distributions of ShiftLIF below \(\tfrac{1}{2}\), of ShiftLIF on \([\tfrac{1}{2},\infty)\), and of rounded INT-LIF on \([\tfrac{1}{2},\infty)\), respectively. Then
\begin{equation}
\begin{aligned}
H(Q_{\text{shift}}(X)) &= h_2(r)+rH(R)+(1-r)H(V),\\
H(Q_{\text{int}}(X)) &= h_2(r)+(1-r)H(T),
\end{aligned}
\end{equation}
where \(h_2(r)=-r\log_2 r-(1-r)\log_2(1-r)\) is the binary entropy. Consequently,
\begin{equation}
U_K^{(\text{shift})}>U_K^{(\text{int})}
\iff
rH(R)+(1-r)H(V)>(1-r)H(T).
\end{equation}
The same formulas remain valid in the degenerate cases \(r=0\) and \(r=1\), with the absent conditional-entropy term interpreted as zero.
\end{lemma}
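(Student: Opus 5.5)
The argument rests on the grouping (chain) rule for entropy. The key structural fact is that both quantizers respect the split at \(\tfrac12\).

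For ShiftLIF, \(Q_{\text{shift}}(v)\le \tfrac14\) when \(v<\tfrac12\), and \(Q_{\text{shift}}(v)\in\{\tfrac12,1\}\) when \(v\ge\tfrac12\). For INT-LIF, \(Q_{\text{int}}(v)=0\) when \(v<\tfrac12\), and \(Q_{\text{int}}(v)\ge 1\) when \(v\ge\tfrac12\).

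Hence in each case the indicator \(B=\mathbf{1}_{\{X<1/2\}}\) is a deterministic function of the output \(Q_A(X)\). I will first record this by checking the definitions of \(Q_{\text{shift}}\) (as the largest admissible dyadic level \(\le v\)) and \(Q_{\text{int}}\) directly. The one exception is the boundary case \(K=0\), where \(\mathcal S_{\text{shift}}=\{0,1\}\); the level sets below and above \(\tfrac12\) are still disjoint there, so the argument is unaffected.

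Given that, \(H(Q_A(X))=H(Q_A(X),B)=H(B)+H(Q_A(X)\mid B)\). Here \(H(B)=h_2(r)\), and
\[
H(Q_A(X)\mid B)=r\,H(Q_A(X)\mid B=1)+(1-r)\,H(Q_A(X)\mid B=0).
\]
By the definitions of \(R,V,T\), the conditional distributions are as follows:
\begin{itemize}
\item \(H(Q_{\text{shift}}(X)\mid B=1)=H(R)\) and \(H(Q_{\text{shift}}(X)\mid B=0)=H(V)\);
\item \(Q_{\text{int}}(X)\) given \(B=1\) is the point mass at \(0\), so it contributes zero entropy, while \(H(Q_{\text{int}}(X)\mid B=0)=H(T)\).
\end{itemize}
This yields both displayed formulas.

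Alternatively, and more elementarily, one can expand \(-\sum_y p(y)\log_2 p(y)\) directly. Write \(p(y)=r\,R(y)\) for levels below \(\tfrac12\) and \(p(y)=(1-r)V(y)\) (respectively \((1-r)T(y)\)) for the others. Then split the logarithm and use \(\sum R=\sum V=\sum T=1\). I would present this direct computation, since it avoids any appeal to conditioning on null events.

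The degenerate cases need a separate remark. If \(r=0\) or \(r=1\), one of the two groups has zero mass and its conditional distribution is undefined. In the direct expansion, the sum over that group is empty or consists of zero terms, and under the convention \(0\log_2 0=0\) with \(h_2(0)=h_2(1)=0\), the formulas hold with the missing term read as zero. This bookkeeping is the only delicate point.

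For the equivalence, note that \(U_K^{(\text{shift})}\) and \(U_K^{(\text{int})}\) share the same positive denominator \(\lceil\log_2(K+2)\rceil\). (This is positive since \(K+2\ge2\).) So comparing them amounts to comparing the numerator entropies. Subtracting the common term \(h_2(r)\) gives exactly \(rH(R)+(1-r)H(V)>(1-r)H(T)\).

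I expect no real obstacle. The main care point is verifying the output-range separation at \(\tfrac12\) for each quantizer, including the unbounded top cell of INT-LIF and the clamping of ShiftLIF at \(1\), since the whole decomposition hinges on \(B\) being recoverable from the output.
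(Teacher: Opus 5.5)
Your main argument is correct and is the same as the paper's. Both quantizers separate their outputs at $\tfrac12$, so the grouping (chain) rule gives the two entropy identities. The common positive denominator $\lceil\log_2(K+2)\rceil$ then turns the comparison of utilizations into a comparison of entropies, in which $h_2(r)$ cancels. Your direct expansion and your handling of $r\in\{0,1\}$ are also fine.

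The step that fails is your remark that the boundary case $K=0$ is unaffected. Take $\mathcal S_{\text{shift}}=\{0,1\}$ and $Q_{\text{shift}}(v)=\max\{s\in\mathcal S_{\text{shift}}:s\le v\}$, which is the definition you adopt. Then every $v\in[\tfrac12,1)$ is sent to $0$, the same output as every $v<\tfrac12$. So the images of $[0,\tfrac12)$ and $[\tfrac12,\infty)$ under $Q_{\text{shift}}$ overlap, and $\mathbf{1}_{\{X<1/2\}}$ is not a function of $Q_{\text{shift}}(X)$.

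The lemma is in fact false there. Let $X$ be uniform on $\{\tfrac14,\tfrac34,\tfrac32\}$, so $r=\tfrac13$.
\begin{itemize}
\item ShiftLIF outputs $0,0,1$, so $H(Q_{\text{shift}}(X))=h_2(\tfrac13)$.
\item $R$ is a point mass and $V$ is uniform on $\{0,1\}$, so $h_2(r)+rH(R)+(1-r)H(V)=h_2(\tfrac13)+\tfrac23$, which differs from the actual entropy.
\item $H(Q_{\text{int}}(X))=h_2(\tfrac13)$ as well, so the two utilizations are equal.
\item Yet $rH(R)+(1-r)H(V)=\tfrac23>0=(1-r)H(T)$, so the claimed equivalence fails too.
\end{itemize}

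You should therefore state the restriction $K\ge 1$ rather than assert that $K=0$ is covered. $K\ge1$ guarantees $\tfrac12\in\mathcal S_{\text{shift}}$, hence $Q_{\text{shift}}([\tfrac12,\infty))\subseteq\{\tfrac12,1\}$, which is exactly the separation your argument needs. The paper's proof makes the same restriction implicitly, since its $V$ is the two-point law of the outputs $\tfrac12$ and $1$.
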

\endgroup

\begin{proof}
Assume first that \(0<r<1\), and define
\begin{equation}
\begin{aligned}
m &= \Pr(\tfrac{1}{2}\le X<1),\\
c &= \Pr(X\ge 1),
\end{aligned}
\end{equation}
so that \(r+m+c=1\). Let \(R=(R_0,R_2,\dots,R_K)\) be the dyadic-shell distribution below \(\tfrac{1}{2}\), where
\begin{equation}
\begin{aligned}
R_0 &= \Pr(X<2^{-K}\mid X<\tfrac{1}{2}),\\
R_k &= \Pr(2^{-k}\le X<2^{-k+1}\mid X<\tfrac{1}{2}), \quad k=2,\dots,K,
\end{aligned}
\end{equation}
let
\begin{equation}
V=\left(\frac{m}{1-r},\frac{c}{1-r}\right),
\end{equation}
and let \(T=(T_1,T_2,\dots,T_{K+1})\) be the rounded-INT distribution on \([\tfrac{1}{2},\infty)\), where
\begin{equation}
\begin{aligned}
T_j &= \Pr(j-\tfrac{1}{2}\le X<j+\tfrac{1}{2}\mid X\ge \tfrac{1}{2}), \quad j=1,\dots,K,\\
T_{K+1} &= \Pr(X\ge K+\tfrac{1}{2}\mid X\ge \tfrac{1}{2}).
\end{aligned}
\end{equation}
For ShiftLIF, the outputs below \(\tfrac{1}{2}\) have total mass \(r\) and conditional distribution \(R\), while the two outputs on \([\tfrac{1}{2},\infty)\) have total mass \(1-r\) and conditional distribution \(V\). By the chain rule for Shannon entropy,
\begin{equation}
H(Q_{\text{shift}}(X))=h_2(r)+rH(R)+(1-r)H(V).
\end{equation}
For rounded INT-LIF, the output \(0\) corresponds to the single event \(\{X<\tfrac{1}{2}\}\) of probability \(r\), while the remaining \(K+1\) outputs have total mass \(1-r\) and conditional distribution \(T\). Another application of the chain rule gives
\begin{equation}
H(Q_{\text{int}}(X))=h_2(r)+(1-r)H(T).
\end{equation}
Because both utilizations are normalized by the same denominator \(\lceil \log_2(K+2)\rceil\),
\begin{equation}
U_K^{(\text{shift})}>U_K^{(\text{int})}
\iff
H(Q_{\text{shift}}(X))>H(Q_{\text{int}}(X)).
\end{equation}
Substituting the two entropy identities above yields
\begin{equation}
U_K^{(\text{shift})}>U_K^{(\text{int})}
\iff
rH(R)+(1-r)H(V)>(1-r)H(T),
\end{equation}
which is the desired criterion.

It remains to consider the edge cases. If \(r=1\), then \(Q_{\text{int}}(X)\equiv 0\) and \(H(Q_{\text{int}}(X))=0\), while \(Q_{\text{shift}}(X)\) is distributed according to \(R\); this agrees with the stated formula after setting \((1-r)H(V)=(1-r)H(T)=0\). If \(r=0\), then ShiftLIF has only the two outputs \(\tfrac{1}{2}\) and \(1\), distributed according to \(V\), while rounded INT-LIF is distributed according to \(T\); again the same formula holds with \(rH(R)=0\).
\end{proof}

  \begin{algorithm}[t]                                                                                                    
  \caption{SNN Training with Spike Rate Regularization}                                           
  \label{alg:spike-rate-reg}                                                                                              
  \begin{algorithmic}[1]                                                                                                  
  \REQUIRE Training set $\mathcal{D}_{\text{train}}$, pretrained weights $\boldsymbol{\theta}_0$, number of epochs $E$,   
  learning rate $\eta$, spike rate penalty weight $\lambda_{\text{sr}}$, target spike magnitude $r^*$                     
  \ENSURE Trained model parameters $\boldsymbol{\theta}^*$                      
  \STATE Initialize SNN model $f_{\boldsymbol{\theta}}$ with $L$ spiking layers; load $\boldsymbol{\theta} \leftarrow     
  \boldsymbol{\theta}_0$                                                                                                  
  \STATE $\text{acc}_{\text{best}} \leftarrow 0$                                   
  \FOR{$e = 1$ \TO $E$}                                                                                                   
      \FOR{each mini-batch $(\mathbf{X}, \mathbf{y}) \in \mathcal{D}_{\text{train}}$}                                     
          \STATE \textit{// Forward pass over $T$ timesteps}                                                              
          \FOR{$t = 1$ \TO $T$}                                                                                           
              \FOR{$\ell = 1$ \TO $L$}                                                                                    
                  \STATE $\mathbf{v}^{(\ell)}_t \leftarrow \mathbf{v}^{(\ell)}_{t-1} + \left( v_{\text{reset}} -          
  \mathbf{v}^{(\ell)}_{t-1} + \mathbf{z}^{(\ell)}_t \right) / \tau$ \hfill $\triangleright$ \textit{LIF charging}         
                  \STATE $\mathbf{s}^{(\ell)}_t \leftarrow \mathcal{Q}_{\text{shift}}\!\left(\mathbf{v}^{(\ell)}_t\right)$
   \hfill $\triangleright$ \textit{Multi-level spike: $\{0, 2^{k_{\min}}, \ldots, 2^0\}$}                                 
                  \STATE $\mathbf{v}^{(\ell)}_t \leftarrow \mathbf{v}^{(\ell)}_t - \mathbf{s}^{(\ell)}_t \cdot            
  v_{\text{th}}$ \hfill $\triangleright$ \textit{Soft reset}                                                              
                  \STATE Collect $\mathbf{s}^{(\ell)}_t$ into spike buffer $\mathcal{S}^{(\ell)}$                         
              \ENDFOR                                       
          \ENDFOR                                
          \STATE $\hat{\mathbf{y}} \leftarrow f_{\boldsymbol{\theta}}(\mathbf{X})$
          \STATE \textit{// Task loss}          
          \STATE $\mathcal{L}_{\text{CE}} \leftarrow \text{CrossEntropy}(\hat{\mathbf{y}},\; \mathbf{y})$                 
          \STATE \textit{// Spike rate regularization (differentiable L1 penalty)}                                        
          \STATE $\mathcal{L}_{\text{sr}} \leftarrow \frac{1}{L} \displaystyle\sum_{\ell=1}^{L} \left[\,                  
  \text{mean}\!\left(|\mathcal{S}^{(\ell)}|\right) - r^* \,\right]^{+}$                                                   
          \STATE $\mathcal{L} \leftarrow \mathcal{L}_{\text{CE}} + \lambda_{\text{sr}} \cdot \mathcal{L}_{\text{sr}}$     
          \STATE \textit{// Backward with surrogate gradients and update}    
          \STATE $\boldsymbol{\theta} \leftarrow \boldsymbol{\theta} - \eta \, \nabla_{\boldsymbol{\theta}} \mathcal{L}$  
          \STATE Clear all spike buffers $\mathcal{S}^{(\ell)}$                       
      \ENDFOR                                                                  
      \STATE Evaluate on $\mathcal{D}_{\text{test}}$; \textbf{if} $\text{acc}_{\text{test}} > \text{acc}_{\text{best}}$   
  \textbf{then} $\boldsymbol{\theta}^* \leftarrow \boldsymbol{\theta}$, $\text{acc}_{\text{best}} \leftarrow              
  \text{acc}_{\text{test}}$                                                                                               
  \ENDFOR                                    \RETURN $\boldsymbol{\theta}^*$                              
  \end{algorithmic}                         
  \end{algorithm}

\section{Training with Spike Activity Regularization}
\label{sec:training_algorithm}

To control the overall spiking activity during training, we augment the task loss with a spike activity regularization term. 
The training objective is defined as
\begin{equation}
\mathcal{L}
=
\mathcal{L}_{\mathrm{CE}}
+
\lambda_{\mathrm{sr}} \mathcal{L}_{\mathrm{sr}},
\end{equation}
where $\mathcal{L}_{\mathrm{CE}}$ denotes the task loss, $\mathcal{L}_{\mathrm{sr}}$ is the spike activity regularizer, and $\lambda_{\mathrm{sr}}$ controls its strength. 
For the $\ell$-th spiking layer, we collect its output spikes over all timesteps in the current mini-batch into $\mathcal{S}^{(\ell)}$, and define
\begin{equation}
\mathcal{L}_{\mathrm{sr}}
=
\frac{1}{L}
\sum_{\ell=1}^{L}
\left[
\operatorname{mean}\!\left(\left|\mathcal{S}^{(\ell)}\right|\right) - r^*
\right]^+ ,
\end{equation}
where $L$ is the number of spiking layers, $r^*$ is the target spike activity, and $[\cdot]^+ = \max(0,\cdot)$. 
This term penalizes only the amount by which the average spike activity exceeds the target level, thereby encouraging sparse responses without unnecessarily suppressing already low-activity layers. 
Since our neuron emits multi-level spikes, $\operatorname{mean}(|\mathcal{S}^{(\ell)}|)$ measures the average spike magnitude rather than a strictly binary firing count. 

The training process is shown as Algorithm~\ref{alg:spike-rate-reg}, where $\mathcal{Q}_{\text{shift}}$ denotes the ShiftQuant surrogate function that maps membrane potentials to multi-level spikes in $\{0, 2^{k_{\min}}, \ldots, 2^0\}$, $\mathcal{S}^{(\ell)}$ denotes the spike buffer of the $\ell$-th spiking layer that collects layer outputs over all timesteps in the current mini-batch, $L$ is the number of spiking layers, $T$ is the number of simulation timesteps, $r^*$ is the target spike activity, $\lambda_{\mathrm{sr}}$ is the regularization weight, and $[\cdot]^+$ denotes the positive-part operator $\max(0,\cdot)$.

% \section{Code}
% Our code is available at \href{https://anonymous.4open.science/r/ShiftLIF-0282}{https://anonymous.4open.science/r/ShiftLIF-0282}

\end{document}